\documentclass[11pt,a4paper]{article}

\usepackage[T1]{fontenc}
\usepackage[utf8]{inputenc}
\usepackage[english]{babel}
\usepackage{lmodern}
\usepackage{microtype}
\usepackage{amsmath,amssymb,amsthm,mathtools}
\usepackage{booktabs}
\usepackage{tabularx}
\usepackage{enumitem}
\usepackage[a4paper,margin=1in]{geometry}
\usepackage[numbers,sort&compress]{natbib}
\usepackage[hidelinks]{hyperref}
\hypersetup{
    pdftitle={Localising Errors in Language Model Inference},
    pdfauthor={Magnus Boman}
}

\newtheorem{definition}{Definition}
\newtheorem{proposition}{Proposition}
\newtheorem{remark}{Remark}

\newcommand{\Tape}[1]{\textsf{Tape#1}}
\newcommand{\Vocab}{\mathcal{V}}
\newcommand{\Bytes}{\Sigma_{\mathrm{byte}}}
\newcommand{\Model}{F_{\theta}}
\newcolumntype{Y}{>{\raggedright\arraybackslash}X}
\newcommand{\etal}{\textit{et al.}}

\title{Localising Errors in Language Model Inference}
\author{Magnus Boman\\
\small Karolinska Institutet, Stockholm, Sweden\\
\small \texttt{magnus.boman@ki.se}}
\date{}

\begin{document}
\maketitle

\begin{abstract}
Failures of large language models are often attributed to ``the model'' as a whole, although inference comprises several transformations with different state and interfaces. We present a multi-tape operational abstraction that separates raw input, tokenisation, model parameters, intermediate computation, token scores, autoregressive token history, and rendered output. Simulation of a fixed finite-precision implementation by a Turing machine follows from standard computability assumptions. Its purpose is to identify intervention points and formulate stage-specific, falsifiable hypotheses. We revisit character counting and bounded hierarchical dependencies to distinguish information preservation from computational accessibility, memorisation from length generalisation, and architectural capacity from the behaviour of a trained model. We also correct a common scratchpad interpretation of chain-of-thought: intermediate reasoning is appended to the token history on which later forward passes condition, while character output is only its rendered form. The resulting semantic representation supports counterfactual tests based on alternative encodings, internal-state interventions, decoding controls, and trace injection without making unsupported impossibility claims.
\end{abstract}

\section{Introduction}

An autoregressive language-model response is produced by a pipeline rather than by a single undifferentiated operation. A character or byte string is tokenised, token IDs are mapped to contextual representations, scores over the vocabulary are computed, a decoding policy selects a token, and the selected token is appended to the context before the next step. Failures observed at the output may therefore be sensitive to the input representation, learned computation, decoding policy, or feedback through the generated history.
This paper introduces a multi-tape operational abstraction of that pipeline. The tapes are bookkeeping devices that separate persistent state and interfaces. The value sought here is diagnostic. By defining admissible interventions at selected interfaces, the abstraction turns informal attributions such as ``a tokenisation error'' or ``a reasoning failure'' into hypotheses that can be tested by counterfactual comparisons.
This distinction is necessary because theoretical results about transformers depend strongly on modelling assumptions. Hard \emph{versus} soft attention, numerical precision, positional encodings, architectural depth, context growth, and the number of generated scratchpad tokens can all change formal conclusions \citep{strobl2024formal}. Results about theoretical capacity also do not establish what a trained model has learned, just as behavioural failure does not by itself establish architectural impossibility. Our contributions are:
\begin{itemize}[leftmargin=*,nosep]
    \item a seven-tape macro-semantics for a fixed autoregressive inference pipeline, with an explicit separation between elementary Turing-machine transitions and higher-level implementation steps;
    \item an intervention-based definition of \emph{stage sensitivity}, which treats localisation as counterfactual evidence rather than as a consequence of how the simulator is written;
    \item analyses of character counting and bounded hierarchical dependencies, including testable hypotheses that do not assume an explicit loop or stack;
    \item an account of chain-of-thought as feedback through generated tokens in the autoregressive context, together with its resource-dependent relationship to computational expressivity.
\end{itemize}

\section{Related Work}

Transformers have been studied under several formal regimes. P\'{e}rez \etal\ prove Turing completeness for a hard-attention construction with arbitrary-precision rational activations \citep{perez2021attention}. Hahn proves limitations for fixed-depth self-attention models under specified assumptions, including limitations on hierarchical formal languages \citep{hahn2020theoretical}. Yao \etal\ show that self-attention networks can recognise and generate bounded-depth Dyck languages under explicit assumptions about depth, positional encodings, precision, and attention type \citep{yao2021bounded}. These findings concern different model classes and resource regimes, as systematised by Strobl \etal\ \citep{strobl2024formal}.

Autoregressive generation introduces another resource dimension. Merrill and Sabharwal show that intermediate generation can increase a decoder-only transformer's expressive power, with the increase depending on scratchpad length and architectural assumptions \citep{merrill2024expressive}. Schuurmans \etal\ obtain computational universality for an extended autoregressive process in which a bounded context advances over an arbitrarily long generated sequence \citep{schuurmans2024autoregressive}. These results make blanket claims about what ``transformers'' can or cannot compute unhelpful unless the resource model is stated.

Programming abstractions such as RASP characterise transformer computations in a compact formal language \citep{weiss2021thinking}. Work on learning automata shows that transformers may realise finite-state computations through shallow, non-recurrent shortcuts rather than by reproducing the apparent step structure of the target algorithm \citep{liu2023transformers}. This is one reason that the absence of a named loop or stack in an architectural description cannot establish absence of a functionally equivalent computation.

Character-level errors have been associated with tokenisation, training data, and counting complexity. In a multi-model study, Fu \etal\ found little effect of word and token frequency but stronger associations with string length, token count, and repeated occurrences of a letter \citep{fu2024large}. Such evidence motivates controlled comparisons rather than a single-cause explanation.

Mechanistic interpretability seeks to identify computations in the parameters and activations of a trained model \citep{elhage2021mathematical}. Causal-abstraction work formalises relations between high-level descriptions and lower-level mechanisms \citep{geiger2025causal}. Attribution-graph studies combine learned replacement models with interventions in an original model, while emphasising that the resulting explanations are partial \citep{lindsey2025biology}. Our abstraction is coarser. It identifies system-level state and intervention sites.

\section{Operational Abstraction}

Let \(\Bytes\) be a finite byte alphabet and let \(\Vocab=\{1,\ldots,N\}\) be a finite token vocabulary. A tokeniser is a computable map
\[
    \tau:\Bytes^{*}\rightarrow\Vocab^{*}.
\]
We write \(\kappa:\Vocab^{*}\rightarrow\Bytes^{*}\) for the corresponding decoding operation. We do not identify \(\kappa\) with \(\tau^{-1}\): normalisation, special tokens, and non-canonical token sequences may prevent tokenisation from being one-to-one, even when \(\kappa(\tau(x))=x\) for the supported inputs of interest.

Fix a model implementation \(\Model\), its parameters, a maximum accessible context \(L\), and a finite numerical format \(\mathbb{F}_{b}\). At generation step \(t\), the forward computation maps the accessible suffix of the token history to a score vector
\[
    \ell_t=\Model\!\left(\operatorname{context}_{L}(z_{1:t})\right)
    \in \mathbb{F}_{b}^{N}.
\]
Implementation-specific intermediate activations and any key--value cache are treated as work state. A decoding policy \(\sigma\) maps \(\ell_t\), and optionally random bits \(r_t\), to the next token \(z_{t+1}\). Greedy decoding is the deterministic special case \(z_{t+1}=\arg\max_i \ell_{t,i}\). The abstraction separates the following stores:
\begin{center}
\begin{tabularx}{\textwidth}{@{}l l Y@{}}
\toprule
Store & Short name & Contents and role \\
\midrule
\Tape{1} & Raw input & Prompt bytes or characters before tokenisation \\
\Tape{2} & Token history & Prompt tokens followed by every generated token; this is the history available to later forward passes, subject to the context policy \\
\Tape{3} & Codec & Tokenisation rules, vocabulary, and decoding map \\
\Tape{4} & Model specification & Architecture, parameters, numerical format, and fixed inference configuration \\
\Tape{5} & Work state & Layer activations, temporary values, and any key--value cache \\
\Tape{6} & Scores and selection & Logits or probabilities, decoding-policy state, and the selected next-token ID \\
\Tape{7} & Rendered output & Human-readable decoding of the generated-token suffix \\
\bottomrule
\end{tabularx}
\end{center}

The partition is not unique. One could merge stores, split the work state by layer, or add audit logs without changing the represented computation. Seven stores are used because they align with interfaces that can, in principle, be observed or intervened on in an implementation. The pipeline can be written as the following macro-operations:
\begin{align*}
\textsc{Tokenise}:&\quad T_2 \leftarrow \tau(T_1),\\
\textsc{Forward}:&\quad (T_5,T_6) \leftarrow \operatorname{Eval}_{T_4}(\operatorname{context}_{L}(T_2),T_5),\\
\textsc{Select}:&\quad z \leftarrow \sigma(T_6;r_t),\\
\textsc{Append}:&\quad T_2 \leftarrow T_2\,z,\\
\textsc{Render}:&\quad T_7 \leftarrow \operatorname{decode\_generated}(T_2,T_3).
\end{align*}
The final four operations repeat until an end token, stopping rule, or generation limit is reached.
These are deliberately \emph{macro-steps}. A standard multi-tape Turing machine reads and writes one symbol per tape per elementary transition; it cannot evaluate a whole softmax or append a multi-symbol token encoding in one transition. Because the codec, finite-precision arithmetic, and decoding policy are computable digital procedures, each macro can be expanded into elementary transitions.

\begin{remark}
The abstraction separates sequential simulation from transformer parallelism. Self-attention within a forward pass may process positions in parallel, while autoregressive generation remains sequential across emitted tokens. Sequential simulation changes efficiency, not the input--output function being represented.
\end{remark}

\section{Intervention-Based Error Analysis}

Let a task instance consist of input \(x\), an admissible-answer set \(A(x)\), and an error indicator \(E(y,x)=\mathbf{1}[y\notin A(x)]\). An intervention \(I_j\) replaces or modifies the contents or transformation associated with store \(T_j\), after which the remaining pipeline is executed. Interventions must be operationally well formed.

\begin{definition}[Stage sensitivity]
For an item distribution \(D\), a system is sensitive to intervention \(I_j\) at stage \(j\) when
\[
\Delta_j=
\mathbb{E}_{x\sim D}\!\left[E(Y,x)\right]
-
\mathbb{E}_{x\sim D}\!\left[E(Y^{I_j},x)\right]
\neq 0,
\]
where \(Y\) and \(Y^{I_j}\) are outputs under the baseline and intervention conditions. Random decoding additionally requires expectations over controlled random seeds or repeated samples.
\end{definition}

A positive \(\Delta_j\) shows that the intervention reduces error; it does not show that stage \(j\) is the unique root cause. Pipeline components interact, and an upstream intervention necessarily changes downstream states. The appropriate conclusion is therefore sensitivity under a specified intervention. Four useful intervention families are:
\begin{center}
\begin{tabularx}{\textwidth}{@{}l Y Y@{}}
\toprule
Target & Example intervention & Evidential scope \\
\midrule
Input and tokenisation & Compare controlled encodings; where supported, supply distinct valid token sequences that decode to the same byte string & Tests sensitivity to representation, not information loss by itself \\
Internal computation & Patch or ablate a defined activation at a named layer and position in an open-weight model & Can test a mechanistic hypothesis if the intervention target is independently identified \\
Selection & Hold logits fixed while changing greedy, sampling, or constrained decoding & Separates scoring from selection effects \\
Autoregressive trace & Insert, remove, or corrupt intermediate generated tokens before the final answer & Tests whether later computation uses the public token trace \\
\bottomrule
\end{tabularx}
\end{center}

\section{Character Counting}

Consider the task of returning the number of occurrences of an ASCII character \(a\) in a string \(x\). Tokenisation changes the units supplied to the model, but a losslessly decoded subword representation need not discard the relevant information.

\begin{proposition}[Additive character count]
Suppose \(\kappa\) is concatenative on a token sequence \(z_{1:n}\), so that
\(
\kappa(z_{1:n})=\kappa(z_1)\cdots\kappa(z_n).
\)
For any ASCII character \(a\),
\[
    \#_{a}\!\left(\kappa(z_{1:n})\right)
    =\sum_{i=1}^{n}\#_{a}\!\left(\kappa(z_i)\right).
\]
\end{proposition}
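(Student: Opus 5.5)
The plan is to reduce the statement to a single fact about byte strings: the occurrence count \(\#_a\) is additive under concatenation, i.e.\ \(\#_a(uv)=\#_a(u)+\#_a(v)\) for all \(u,v\in\Bytes^{*}\). Once this lemma is available, the proposition follows by induction on \(n\), using the concatenativity hypothesis to rewrite \(\kappa(z_{1:n})\) as \(\kappa(z_1)\cdots\kappa(z_n)\).

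First I will fix the precise definition \(\#_a(w)=\lvert\{\,k : 1\le k\le \lvert w\rvert,\ w_k=a\,\}\rvert\) for \(w\in\Bytes^{*}\), with \(\#_a(\varepsilon)=0\). Since \(a\) is ASCII, I will treat it as a single byte, so an occurrence is one position of \(w\). This is the only point that needs care. Because \(a\) occupies exactly one byte, an occurrence cannot straddle the boundary between \(\kappa(z_i)\) and \(\kappa(z_{i+1})\). That is why additivity holds here, whereas it would fail for multi-byte patterns or for multi-byte UTF-8 characters. I will state this assumption explicitly.

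For the binary lemma, the positions of \(uv\) split disjointly into \(\{1,\ldots,\lvert u\rvert\}\), where \((uv)_k=u_k\), and \(\{\lvert u\rvert+1,\ldots,\lvert u\rvert+\lvert v\rvert\}\), where \((uv)_k=v_{k-\lvert u\rvert}\). The shift \(k\mapsto k-\lvert u\rvert\) is a bijection from the second block onto the positions of \(v\) that preserves whether the byte equals \(a\). Counting over the disjoint union therefore gives the sum.

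For the induction, the base case \(n=0\) (or \(n=1\)) is immediate. For the step, I will write \(\kappa(z_1)\cdots\kappa(z_n)=\bigl(\kappa(z_1)\cdots\kappa(z_{n-1})\bigr)\kappa(z_n)\) by associativity of concatenation, then apply the binary lemma and the induction hypothesis to the unrendered product \(\kappa(z_1)\cdots\kappa(z_{n-1})\). The hypothesis is applied to this product, not to \(\kappa(z_{1:n-1})\), so concatenativity of prefixes is never needed. I expect no real obstacle; the only subtle step is justifying the single-byte nature of \(a\), which rules out boundary-spanning occurrences.
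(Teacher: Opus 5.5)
Your proposal is correct and uses the same idea as the paper's proof, which observes that occurrences of \(a\) in the concatenation are partitioned by the constituent substrings \(\kappa(z_i)\), so summing their counts gives the total. Your binary lemma with induction, together with the remark that a single-byte \(a\) cannot straddle a token boundary, makes that one-line partition argument fully explicit.
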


\begin{proof}
Occurrences of \(a\) in a concatenation are partitioned by the concatenated substrings. Summing their counts therefore gives the count in the complete decoded string.
\end{proof}

Because \(\Vocab\) is finite, the values \(\#_{a}(\kappa(z))\) can in principle be stored as a finite lookup table for every token. A model could then aggregate them across token positions. This observation shows that token-internal character positions need not be exposed as separate sequence positions for exact counting to be representable. The useful empirical question is consequently which controlled factors predict error and which interventions change it. The multi-tape abstraction suggests three hypotheses:
\begin{description}[leftmargin=2.5em,style=nextline]
    \item[H1 Representation sensitivity.] Holding the decoded target string and task instruction fixed where possible, alternative valid token segmentations change accuracy.
    \item[H2 Counting-complexity sensitivity.] Accuracy decreases with string length or repeated target occurrences after controlling for token count and lexical familiarity.
    \item[H3 Trace sensitivity.] Supplying a correct character decomposition in the token history improves the final count, while controlled corruption of that decomposition degrades it.
\end{description}

A suitable evaluation would combine real words with generated nonce strings, balance target counts and lengths, record the actual token IDs for each model, and use paired conditions on every item. Character-separated prompts are feasible but change the input string; they should therefore be described as representational aids, not as the same tokenisation. Directly supplying non-canonical token IDs is a stronger intervention only where the model interface permits it and the resulting sequence decodes to the same bytes. Closed and open-weight models should be reported separately, with exact model versions and decoding settings.

\section{Bounded Hierarchical Dependencies}

Centre embedding creates nested dependencies, but nesting of the kind illustrated by balanced brackets is context-free, not inherently context-sensitive. Dyck languages provide a standard formalisation and can be recognised by a pushdown automaton. For bounded nesting depth, self-attention constructions can recognise or generate the corresponding restricted languages under explicit resource assumptions \citep{yao2021bounded}. Hahn's limitations and these positive results concern different regimes \citep{hahn2020theoretical}. The repeated occurrence of a token such as \emph{that} does not itself create an ambiguity unavailable to a transformer. Positions containing the same lexical token acquire different contextual representations. The substantive questions concern whether a fixed trained system tracks the relevant dependencies and whether it generalises beyond depths or templates encountered during training. Open-ended completion is a poor test because many grammatical main verbs may complete a sentence. A controlled evaluation can instead compare the probability of matched and mismatched agreement continuations. For example, after a prefix such as
\begin{quote}
\emph{The report that the senators that the journalist interviewed criticised \ldots}
\end{quote}
one can compare \emph{was} with \emph{were}, while systematically varying the number features of each noun and the embedding depth. Matched lexical templates, singular/plural counterbalancing, and held-out depths help separate hierarchical generalisation from surface-frequency cues. The operational semantics yields hypotheses rather than a proof of inability:
\begin{itemize}[leftmargin=*,nosep]
    \item error may increase with depth even within the context window;
    \item an explicit structural trace appended to \Tape{2} may improve agreement decisions;
    \item corrupting the trace at a particular dependency may selectively reverse that improvement;
    \item behaviour may vary with positional encoding, model scale, instruction tuning, and decoding interface.
\end{itemize}
Evidence for these hypotheses would demonstrate resource- or training-dependent sensitivity.

\section{Chain-of-Thought as Autoregressive Feedback}

Chain-of-thought prompting encourages a model to emit intermediate tokens before a final answer \citep{wei2022chain,kojima2022large}. In the present abstraction, a generated reasoning token is selected from \Tape{6}, appended to \Tape{2}, represented in subsequent work state on \Tape{5}, and rendered on \Tape{7}. Later forward passes condition on the token history on \Tape{2}; they do not read the characters on \Tape{7}. Chain-of-thought is therefore an externalisation from hidden computation into a public, autoregressively reusable token sequence.

This account makes several limits explicit without turning them into impossibility claims. The trace consumes context and generation steps; its content may be incorrect or causally unfaithful; and later computation may ignore it. Conversely, generated tokens can provide additional computational steps and persistent discrete state. Theoretical work therefore finds that scratchpad generation can increase expressive power in a resource-dependent manner \citep{merrill2024expressive}. Under a different extended-decoding construction, autoregressive generation can even support universal computation \citep{schuurmans2024autoregressive}. It is consequently incorrect to say without qualification that chain-of-thought leaves the computational class unchanged.

Trace interventions make the operational claim testable. For a fixed task item, one may compare a direct answer with (i) a model-generated trace, (ii) an injected correct trace, (iii) a minimally corrupted trace, and (iv) a length-matched irrelevant trace. Differences in final accuracy or logits indicate how the public token history affects the answer.

\section{Discussion}

The multi-tape representation cannot by itself determine whether a model uses memorisation, an exact algorithm, a distributed heuristic, or several redundant strategies. Such claims require behavioural controls, internal measurements, and interventions appropriate to the desired level of explanation. Stage boundaries are modelling choices: tokenisation, embedding lookup, attention, feed-forward computation, caching, scoring, sampling, and rendering can be split more finely or grouped differently. The proposed partition is useful only to the extent that it supports reproducible measurements and interventions. The semantics also abstract away hardware scheduling and most numerical details. A faithful simulator would encode floating-point formats, kernels, parallel reductions, and implementation-specific non-determinism. Those details matter for bit-level reproduction but are not needed to define the higher-level intervention sites used here. Finally, no empirical results are reported, but experiments are under way. The character-counting and hierarchical-dependency sections specify hypotheses and controls for future evaluation. A multi-tape description of language-model inference separates state and interfaces so that claims about failure can be expressed as counterfactual, stage-specific hypotheses. Its scientific value will ultimately depend on whether those distinctions support reproducible empirical discoveries.

\section*{Acknowledgements}
We thank Sol Erika Boman, Daniel F P\'{e}rez-Ram\'{i}rez and Giacomo Verardo for important comments on an earlier draft. This research was financed by the Swedish Research Council through the project \emph{Scalable Federated Learning}.

\begingroup
\small
\raggedright
\bibliographystyle{plainnat-mab}
\bibliography{references_v3}
\endgroup

\end{document}